\title{\LARGE \bf
A Task-Motion Planning Framework Using Iteratively Deepened AND/OR Graph Networks
}
\author{Hossein Karami, Antony Thomas${^1}$ and Fulvio Mastrogiovanni
\thanks{All the authors are with the Department of Informatics, Bioengineering,
Robotics, and Systems Engineering, University of Genoa, Via Opera Pia 13,
16145, Genoa, Italy.}%

\thanks{${^1}$Corresponding author’s email: antony.thomas@dibris.unige.it.}%
}
\begin{document}

\maketitle
\thispagestyle{empty}
\pagestyle{empty}

\begin{abstract}
We present an approach for Task-Motion Planning (TMP) using Iterative Deepened AND/OR Graph Networks (\textsc{TMP-IDAN}) that uses an AND/OR graph network based novel abstraction for compactly representing the task-level states and actions. While retrieving a target object from clutter, the number of object re-arrangements required to grasp the target is not known ahead of time. To address this challenge, in contrast to traditional AND/OR graph-based planners, we grow the AND/OR graph online until the target grasp is feasible and thereby obtain a network of AND/OR graphs. The AND/OR graph network allows faster computations than traditional task planners. We validate our approach and evaluate its capabilities using a Baxter robot and a state-of-the-art robotics simulator in several challenging non-trivial cluttered table-top scenarios. The experiments show that our approach is readily scalable to increasing number of objects and different degrees of clutter. 
\end{abstract}

\section{Introduction}
\label{sec:intro}
Autonomous robots operating in the real-world are often faced with different levels of reasoning while executing their tasks. For example, while manipulating objects in clutter, a high-level reasoning is required to make discrete decisions regarding which object to grasp or push. To execute these decisions a robot has to reason in the continuous collision-free motion space. Planning in these discrete-continuous spaces is referred to as \textit{Task-Motion Planning} (TMP)~\cite{lagriffoul2018RAL}.

In the recent past, TMP has received considerable interest among the research community~\cite{kaelbling2013IJRR, srivastava2014ICRA,dantam2016RSS,garrett2018IJRR,thomas2019ISRR,garrett2019arxiv}. The primary challenge of planning in the hybrid space is to obtain an efficient mapping between the discrete task and continuous motion levels. The de facto standard syntax for task planning is the \textit{Planning Domain Definition Language} (PDDL)~\cite{mcdermott1998AIPS} and most approaches resort to the same for task planning. Semantic attachments are used in~\cite{dornhege2009SSRR,dornhege2009ICAPS,dantam2016RSS,thomas2019ISRR}, associating algorithms to functions and predicate symbols via external procedures. Though semantic attachments allow mapping between the task and motion space, the environment is assumed to be known before-hand. Besides, the robot configuration, grasp poses need to be specified in advance and renders the continuous motion space to be finite. This \textit{a priori} discretization is relaxed in~\cite{kaelbling2011ICRA,srivastava2014ICRA,toussaint2015IJCAI,garrett2018IJRR}. Yet, these approaches are domain specific in nature. Moreover, most approaches (for example~\cite{erdem2011ICRA,srivastava2014ICRA,dantam2016RSS}) first compute a task-level plan and refines it until a feasible motion plan is found or until timeout. The approach in~\cite{thomas2019ISRR}, however, checks for the motion feasibly as each task-level action is expanded by the task planner. But this approach assumes a pre-discretized motion space and thereby a finite action set. Recent work of Caelan \textit{et al.}~\cite{garrett2020ICAPS} address this limitation by introducing \textit{streams} within PDDL, which enable procedures for sampling values of continuous variables and thereby encoding an infinite set of actions. 

\begin{wrapfigure}{L}{0.15\textwidth}
\begin{center}
   \includegraphics[trim=1 9 2 9,width=1\textwidth]{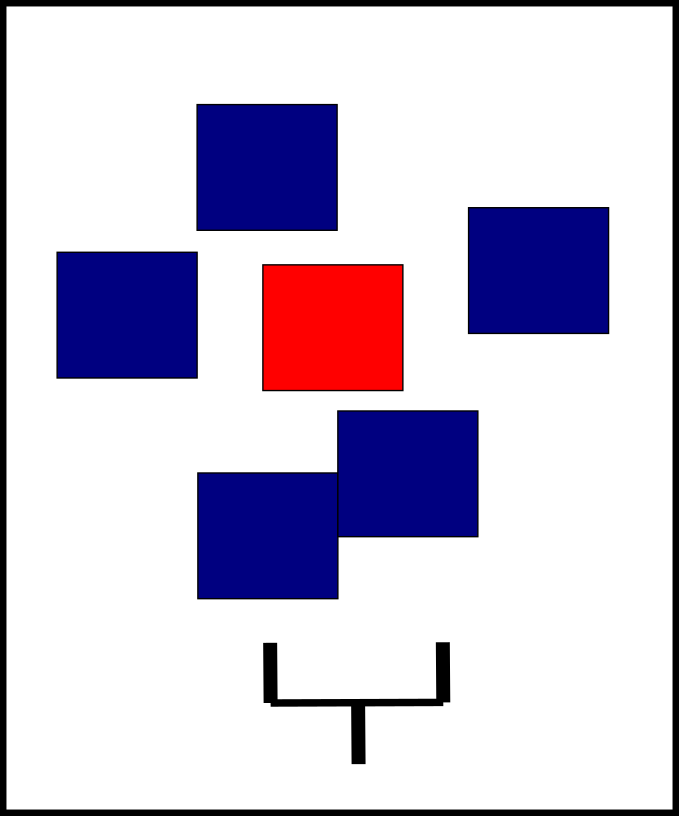}
  \end{center}
  \caption{Toy example in 2D; gripper needs to retrieve the red object from clutter.}
 \label{fig:toy}
\end{wrapfigure} 
Though off-the-shelf PDDL planners are available, one needs additional expertise to incorporate semantic attachments or streams that comply with the state-space search of the planner. Let us consider a simple 2D pick and place scenario\footnote{Also referred as cluttered table-top scenario in the paper.} shown in Fig.~\ref{fig:toy}, where a gripper needs to retrieve a target object (red) from clutter. As seen in the figure the target object is not immediately graspable and other objects (blue) that hinder the target grasp need to be re-arranged. This toy scenario may be modeled using two PDDL actions \texttt{pick} and \texttt{place}. Note that for a TMP problem each discrete action should also realize appropriate motions to achieve the desired tasks. For the PDDL domain file one may define the following predicates: \vspace{0.1cm} \\ 
{ \textsf{\hspace{0.5cm}   (\textbf{:predicates} (clear ?x)
              (gripper-empty)
             (holding ?x))}}
\vspace{0.1cm} \\
To keep the scenario simple enough, we deliberately make the assumption that if the target grasp is hindered, the robot is able to pick up another object/objects to make the target \texttt{pick} action feasible. We note that PDDL planners initiate the search through a process called grounding. Grounding is used to replace the variables in the predicates with different objects so as to instantiate predicates and action schemas. The predicates \textsf{clear\ ?x}, \textsf{gripper-empty}, and \textsf{holding\ ?x} checks if the object is graspable, gripper arm is empty, and if the gripper is holding the object. For 6 objects this gives rise to 13 propositions and therefore $n = 2^{13}$ states\footnote{Six possibilities for each of the predicates \textsf{clear\ ?x}, \textsf{holding\ ?x} and one for \textsf{gripper-empty}}. Shortest-path found using a state-transition graph via the \texttt{pick} and \texttt{place} actions would take $O(n \log n)$ time~\cite{bryce2007AIM}. Clearly, heuristic search avoids visiting large areas of the transition graph via informed search to arrive at the goal faster. Furthermore, the number of objects that need to be re-arranged is not known before-hand and this would require observation based re-planning within the PDDL framework. This requires additional mapping between the PDDL action space and the observation space of the robot. Moreover, the time taken for each re-planning further exacerbates the time complexity issue. Furthermore, PDDL planning is EXPSPACE-complete and can be restricted to less compact propositional encodings to achieve PSPACE-completeness~\cite{helmert2006PHD}. 

We present a probabilistically complete approach for TMP, specifically focusing on re-arranging a cluttered table-top to retrieve a target object. We tackle the two challenges described above, that is, (1) the number of objects to be re-arranged may not be known in advance and (2) the computational complexity of typical PDDL based planners. To address these challenges, we encode the task-level abstractions of the TMP problem efficiently and compactly within an AND/OR graph that grows iteratively until a solution is found. However, in general an AND/OR graph needs to be constructed offline and therefore it is required to know before-hand the number of objects to be re-arranged. Since this is unknown and depends on the degree of clutter, we introduce AND/OR graph networks that iteratively deepens during run-time till the target is retrieved. In Section~\ref{sec:AOgraph} we show that task-planning complexity is almost linear with respect to the number of graph iterations and validate the same in Section~\ref{sec:results}. 

The rest of the paper is organized as follows. Task-motion formalism are introduced in Section~\ref{sec:background}. We introduce AND/OR graph networks in Section~\ref{sec:AOgraph} and formalize our approach in Section~\ref{sec:approach}. The results are presented in Section~\ref{sec:results} and Section~\ref{sec:conclusion} concludes the paper.


\section{Task-Motion Formalism}
\label{sec:background}
Task planning or classical planning is the process of finding a discrete sequence of actions from the current state to a desired goal state~\cite{ghallab2016book}.
\begin{defn}A \textit{task} domain $\Omega$ can be represented as a state transition system and is a tuple $\Omega = \langle S, A, \gamma, s_0, S_g \rangle$ where:
\label{def:one}
\end{defn}
\vspace{-0.7cm}
\begin{itemize}
\item $S$ is a finite set of states;
\item $A$ is a finite set of actions;
\item $\gamma : S \times A \rightarrow S$ such that $s' = \gamma(s, a)$;
\item $s_0 \in S$ is the start state;
\item $S_g \subseteq S$ is the set of goal states.
\end{itemize}
\begin{defn} The task \textit{plan} for a task domain $\Omega$ is the sequence of actions $a_0,\ldots,a_m$ such that $s_{i+1} = \gamma(s_i, a_i)$, for $i = 0,\ldots,m$ and $s_{m+1}$ \textit{satisfies} $S_g$.
\end{defn}
Motion planning finds a sequence of collision free configurations from a given start configuration to a desired goal~\cite{latombe1991robot}.
\begin{defn}A \textit{motion planning domain} is a tuple $M = \langle C, f, q_0, G \rangle$ where:
\end{defn}
\begin{itemize}
\item $C$ is the configuration space;
\item $f =\{0,1\}$, for collision ($f=0$) else ($f =1$);
\item $q_0 \in C$ is the initial configuration;
\item $G \in C$ is the set of goal configurations.
\end{itemize}
\begin{defn} A motion \textit{plan} for $M$ finds a collision free trajectory in $C$ from $q_0$ to $q_n \in G$ such that $f=1$ for $q_0,...,q_n$. Alternatively, A motion \textit{plan} for $M$ is a function of the form $\tau : [0, 1] \rightarrow C_{free}$ such that $\tau(0) = q_0$ and $\tau(1) \in G$, where $C_{free} \subset C$ is the configurations where the robot does
not collide with other objects or itself.
\end{defn}
TMP combines discrete task planning and continuous motion planning to facilitate efficient interaction between the two domains. Below we define the TMP problem formally.
\begin{defn}A \textit{task-motion planning} with task domain $\Omega$ and motion planning domain $M$ is a tuple $\Psi =\langle C, \Omega, \phi, \xi, q_0 \rangle$ where:
\end{defn}
\vspace{-0.2cm}
\begin{itemize}
\item $\phi : S  \rightarrow 2^ C$, maps states to the configuration space; 
\item $\xi : A  \rightarrow 2^ C$, maps actions to motion plans.
\end{itemize}
\begin{defn}The \textit{TMP problem} for the TMP domain $\Psi$ is to find a sequence of discrete actions $a_0,...,a_n$ such that $s_{i+1} = \gamma(s_i, a_i)$, $s_{n+1} \in S_g$ and a corresponding sequence of motion plans $\tau_0,...,\tau_n$ such that for $i = 0,...,n$, it holds that (1) $\tau_i(0) \in \phi(s_i) \ \textrm{and} \ \tau_i(1)  \in \phi(s_{i+1})$, (2) $\tau_{i+1}(0) = \tau_i(1)$, and (3) $\tau_i \in \xi(a_i)$. 
\end{defn}

\section{AND/OR Graph Networks}
\label{sec:AOgraph}
An AND/OR graph is a graph which represents a problem-solving process~\cite{chang1971AI}. Below we provide a brief overview of AND/OR graphs; a detailed exposition can be found in~\cite{karami2020ROMAN}.

\begin{defn}
An AND/OR graph $G$ is a directed graph represented by the tuple $G = \langle N,H \rangle$ where:
\begin{itemize}
\item $N$ is a set of nodes;
\item $H$ is a set of hyper-arcs.
\end{itemize}
\end{defn}
For a given AND/OR graph $G$, $H = \{h_1,\ldots,h_m\}$, where $h_i$ is a many-to-one mapping from a set of child nodes to a parent node. Let us revisit the toy example considered in Section~\ref{sec:intro} (Fig.~\ref{fig:toy}). An AND/OR graph for this scenario is visualized as a sub-graph in Fig.~\ref{fig:pick_place_graph} consisting of the nodes \textsf{graspable}, \textsf{gripper-empty}, \textsf{object picked}, \textsf{target placed} and \textsf{object placed}. Note that we currently ignore the \textsf{current configuration} node (green color) and the graph extending from the \textsf{object placed} node in the figure, details of which will be described later. Each node $n_i \in N$ of the graph $G$ represents a high-level state, for example, \textsf{object picked}. To achieve the state \textsf{object picked}, the gripper must be empty, that is \textsf{gripper-empty} and the object should \textsf{graspable}. Thus the nodes \textsf{graspable}, \textsf{gripper-empty} and \textsf{object picked} are akin to the PDDL predicates \textsf{clear\ ?x}, \textsf{gripper-empty}, and \textsf{holding\ ?x}, respectively. The hyper-arc induces a mapping from the child nodes \textsf{gripper-empty} and \textsf{graspable} to the parent node \textsf{object picked}. In that sense, a hyper-arc induces a logical \textsf{AND} relationship between the child nodes/states, that is, all the child states should be satisfied simultaneously to achieve the parent state. Similarly, a single parent node can be the codomain for different hyper-arcs $h_i$. These hyper-arcs are in logical \textit{OR} with the parent node. Nodes without any successors or children are called the \textit{terminal} nodes. The terminal nodes are either a success node, that is, \textsf{target placed} or a failure node, that is, \textsf{object placed}. 

Let us again consider the toy example in Fig.~\ref{fig:toy}. If the number of objects to be re-arranged are known, then an AND/OR graph can be constructed using the the above mentioned states and corresponding transitions. Yet, the number of object re-arrangements is scenario dependent and not known ahead of time. The AND/OR graph representation thus seems incompatible. However, we make the following observation--- the abstractions defined for clutter scenario, that is, the states (nodes of $G$) and actions (hyper-arcs of $G$) remain the same irrespective of the number of object re-arrangements. Thus we can envision a graph $G$ that expands online by repeating itself until the target is retrieved. Nevertheless, each iteration of $G$ corresponds to a new work-space configuration (object arrangement) and to encode this aspect we augment $G$ with a virtual node that represents the current work-space configuration. 
\begin{defn}
For an AND/OR graph $G=\langle N,H \rangle$, an augmented AND/OR graph $G^a$ is a directed graph represented by the tuple $G^a = \langle N^a,H^a \rangle$ where:
\begin{itemize}
\item $N^a = \{N,n^v\}$ with $n_v$ being the virtual node;
\item $H^a = \{H,H^v\}$ with $H^v=\{h^v_i\}_{1 \leq i \leq |H^v|}$.
\end{itemize}
\end{defn}
The virtual node $n^v$ is called the root node of the augmented graph $G^a$. Each virtual hyper-arc $h^v_i$ induces a mapping between the virtual node and a node of the graph $G$. In our cluttered table-top scenario, the most trivial case corresponds to successfully grasping the target without any object re-arrangements. However, most often due to clutter, a motion plan to the target do not exist and objects need to be re-arranged to obtain a feasible plan. In general, motion planner failure can arise if a path indeed does not exist or because the planning time allotted was insufficient. In this work, we assume that sufficient time is allotted to the planner so that a motion planning failure implies objects obstructing the path to the target. 

\begin{remark}
If the target grasp is unsuccessful, that is, the motion planner fails, then at least one object need to be re-arranged to search for a new target-graspable path.
\label{remark1}
\end{remark}
 Thus, following the root node, a success node is reached if the \textsf{target placed} task is a achieved and hence the graph is \textit{solved}. Else, the graph is terminated at the failure node (\textsf{object placed}) following the re-arrangement of a single object and the graph is therefore \textit{unsolved}. In our approach a single augmented AND/OR graph $G^a$ represents such a problem solving process. It readily follows from Remark~\ref{remark1} that if the graph $G^a$ is terminated at the failure node a re-attempt is to be made to achieve a grasp of the target object. Such an attempt can again lead to a \textit{solved} or \textit{unsolved} graph. Therefore to achieve the required objective, it is necessary to iterate $G^a$ till a success node is reached, that is, $G^a$ is solved. This \textit{roll-out} of augmented AND/OR graphs give rise to an AND/OR graph network.   
\begin{defn}
An AND/OR graph network $\Gamma$ is a directed graph $\Gamma = \langle \mathcal{G},T \rangle $ where:
\begin{itemize}
\item $\mathcal{G} = \{G^a_1,\ldots,G^a_{n'}\}$ is a set of augmented AND/OR graphs $G^a_i$;
\item $T= \{t_1,\ldots,t_{n'-1}\}$ is a set of transitions such that $G^a_{i+1} = t_i(G^a_i), \ 1 \leq i \leq n'-1$.
\end{itemize}
\end{defn}

\begin{figure}[]
\includegraphics[width=0.75\textwidth]{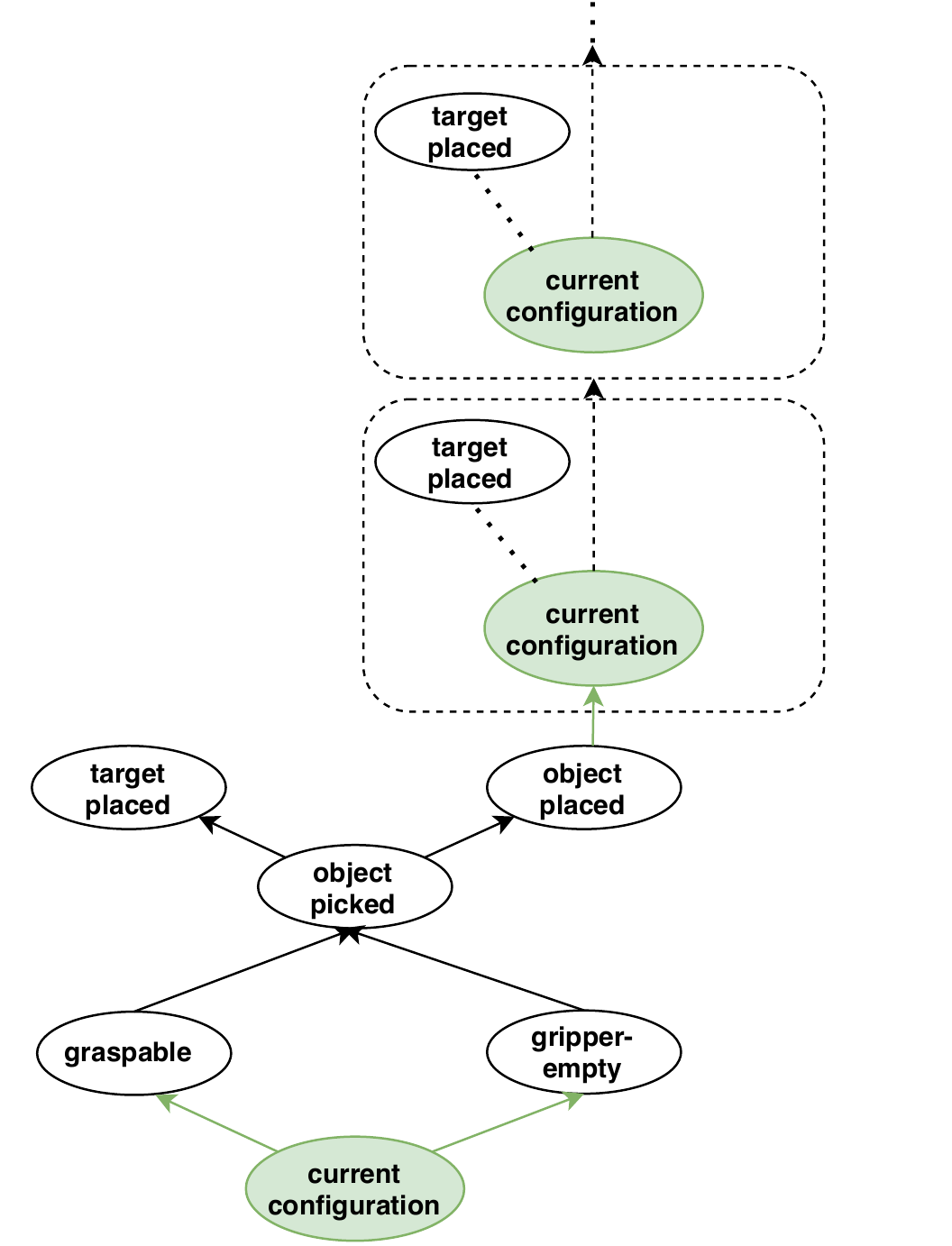}
\caption{AND/OR graph network for the pick and place scenario.}
\label{fig:pick_place_graph}
\end{figure}

\noindent where $n'$ is the total number of graphs in the network. Alternatively, $n'$ is also the depth of the network. Note that $t_i$ is defined only if $G^a_i$ is unsolved, that is, if the graph terminates at the failure node. This transitions $G_i$ to a new augmented AND/OR graph $G^a_{i+1}=t_i(G^a_i)$, updating the root node of $G^a_{i+1}$ to the changed work-space configuration. Thus for the pick and place scenario, we obtain an AND/OR graph network as shown in Fig.~\ref{fig:pick_place_graph}. The root node is the virtual node that represents the current work-space configuration (green node, that is, \textsf{current configuration}). The object to be picked and placed is decided by a cost function, for example, the proximity to the gripper and therefore for a single pick operation we need to traverse only the nodes till \textsf{object picked}. If the target object is being held, it is then put-down and the graph terminates, else a new object is selected to clear the path to the target. Once the object is moved, a new graph is grown with the \textsf{current configuration} being the updated current work-space configuration taking into account the object movement.

\noindent \textit{Complexity comparison:} In Section~\ref{sec:intro} we have discussed the computational complexity for the toy example in Fig.~\ref{fig:toy} modeled using PDDL. We now analyze the complexity aspects when the same example is modeled using our AND/OR graph network (see Fig.~\ref{fig:pick_place_graph}). Assuming sufficient time is allotted for the motion planner, in the worst case all the $5$ objects need to be re-arranged leading to $5\times5$ states\footnote{5 nodes for each object as seen in Fig.~\ref{fig:pick_place_graph} and 5 iterations due to 5 objects.} as opposed to $2^{13}$ using PDDL (see Section~\ref{sec:intro}). However it can happen that the task execution fails, for example due to motion, actuation or grasping errors. In such a case a new graph $G^a_{i+1}$ is grown retaining the same work-space configuration as $G^a_{i}$. Therefore the number of iterations $m$ may be greater than the number of objects (here $m>5$) and hence the time complexity is thus $O(5m) \approx O(m)$. Note that for PDDL based planning, shortest path takes $O(n \log n)$ time where $n = 2^{13}$ is the number of states. In general for an AND/OR graph network with each graph consisting of $n$ nodes, the time complexity is only $O(nm)$. Similarly, for an AND/OR graph with $n$ nodes a storage of only $O(n)$ nodes is required.

\begin{proposition}
An AND/OR graph network $\Gamma = (\mathcal{G},T)$ is said to be solved at depth $d$ when $G^a_d$ is solved.
\label{prop}
\end{proposition}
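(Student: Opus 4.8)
The plan is to treat this statement as a \emph{well-posedness} claim: I would show that the depth $d$ at which $\Gamma$ is declared solved is uniquely determined, that it coincides with the depth $n'$ of the network, and that ``$G^a_d$ is solved'' is equivalent to achieving the target-retrieval objective of the whole network. First I would recall the dichotomy built into the augmented graph construction: each $G^a_i$ terminates either at the success node \textsf{target placed} (and is then \emph{solved}) or at the failure node \textsf{object placed} (and is then \emph{unsolved}), with no third possibility. This dichotomy, together with the rule that the transition $t_i$ is defined \emph{only} when $G^a_i$ is unsolved, is the structural backbone of the argument.

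For the forward direction I would assume that $G^a_d$ is solved and show that $d$ must equal $n'$. Since $t_i$ is defined only for unsolved graphs, the moment $G^a_d$ is solved no transition $t_d$ exists, and hence no successor graph $G^a_{d+1} = t_d(G^a_d)$ is generated. Therefore $G^a_d$ is the terminal graph of the roll-out, so $d = n'$ by the definition of the network depth. Because solving $G^a_d$ means reaching \textsf{target placed}, the target has been retrieved and the network has met its objective; thus $\Gamma$ is solved, and precisely at depth $d = n'$.

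For the converse I would establish uniqueness. I would argue that every intermediate graph $G^a_1,\ldots,G^a_{n'-1}$ must be unsolved, since each of these indices carries a defined transition $t_i \in T$, and $t_i$ exists only when $G^a_i$ is unsolved. Consequently the only graph in $\mathcal{G}$ that can possibly be solved is the terminal graph $G^a_{n'}$. Hence if $\Gamma$ is solved at all, the witnessing depth is forced to be $d = n'$, and no other depth can serve --- which is exactly the content of the proposition, now read as a well-defined assignment of a unique solving depth to a solved network. Invoking Remark~\ref{remark1} closes the loop, since it guarantees that an unsolved graph always triggers a re-arrangement and a genuine successor, so the chain $t_1,\ldots,t_{n'-1}$ faithfully records the sequence of failed attempts up to the single successful one.

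The hard part, I expect, is not any calculation but pinning down the right reading of an ostensibly definitional statement: the substance lies in showing that the conditional definition of $t_i$ forces \emph{at most one} solved graph in the network, so that ``solved at depth $d$'' is unambiguous and the depth is identified with $n'$. Once that uniqueness is in hand, both directions follow immediately from the success/failure dichotomy of the augmented graphs.
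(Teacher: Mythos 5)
You should first note a peculiarity of the target: despite being labelled a proposition, the statement is phrased as a definition (``is said to be solved at depth $d$ when\ldots'') and the paper supplies \emph{no proof for it at all} --- it is immediately used definitionally (``Following Proposition~\ref{prop}, \ldots\ is expanded till a $G^a_i$ is solved''). Your reading of it as a well-posedness claim is therefore a genuine addition rather than a rediscovery of the paper's argument, and it is essentially correct: since the network definition stipulates that $t_i$ is defined only when $G^a_i$ is unsolved, every graph carrying an outgoing transition --- i.e.\ $G^a_1,\ldots,G^a_{n'-1}$ --- cannot be solved, so at most one graph in $\mathcal{G}$ (the terminal one, $G^a_{n'}$) can witness solvedness, which pins down the solving depth uniquely and identifies it with the network depth $n'$. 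That is precisely the implicit content the paper relies on, e.g.\ in the probabilistic-completeness lemma (``if $G^a_j$, $j \leq l$ is solved then $\Gamma$ is solved''), and making it explicit buys rigor the paper leaves tacit. One soft spot in your write-up: the strict success/failure dichotomy you take as the ``structural backbone'' is asserted in the paper's Lemma proof but is weakened elsewhere --- the paper allows a new graph $G^a_{i+1}$ to be grown, with the work-space configuration of $G^a_i$ retained, when a task in $G^a_i$ fails mid-execution, so a graph can be abandoned before reaching either terminal node (the experiments explicitly mention graphs expanded ``before reaching the terminal node''). Your uniqueness argument survives this, but only if you restate it with the weaker premise that a graph with a defined outgoing transition is \emph{not solved} (the target has not been placed), rather than that it terminated at the failure node; as written, the forward direction leans on a trichotomy-free dichotomy that the paper's own operational account violates.
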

Following Proposition~\ref{prop}, an AND/OR graph net with underlying augmented AND/OR graphs $G^a_i$ is expanded till a $G^a_i$ is solved. Note that as discussed in Section~\ref{sec:intro}, we are interested in problems with solutions. For example in a cluttered table-top TMP scenario where a target object is to be grasped, we assume that a feasible grasp exist. This may result in a network of infinite depth. However, as a consequence of our assumption, that is, a solution exist, the network is solved at infinite depth. Moreover, to alleviate the issue of infinite depth, a predetermined depth limit may be employed. Thus, as the predetermined limit approaches infinity a solution is found. Practically, solutions with larger depth limits may be discarded as it may be too time consuming to arrive at. This enables us to define some sort of probabilistic completeness as will be seen later in Section~\ref{sec:approach}.  



\section{TMP-IDAN}
\label{sec:approach}
We present a novel TMP approach \textsc{TMP-IDAN} (Task-Motion Planning using Iterative Deepened AND/OR Graph Networks) that uses AND/OR graph networks to compactly encode the task-level abstractions to reduce the overall task planning complexity. 

\subsection{System's Architecture}
\label{sub:architecture}
An overview of \textsc{TMP-IDAN} is given in Fig.~\ref{fig:arch}. In general the approach requires (1) \textit{perception capabilities} to comprehend objects in the scene, (2) \textit{task planning} to select the abstract actions and finally (3) \textit{motion planning} for executing the action to manipulate objects, avoiding potential collisions. The perception capabilities are encapsulated in a single module, which is called the \textit{Scene Perception}. This module provides the \textit{Knowledge Base} module with the information about the current work-space configuration, that is, the location of objects and the configuration of robot. The planning layer is made up of two modules, namely the \textit{TMP Interface}, and the \textit{Motion Planner} modules. The \textit{TMP Interface} module receives discrete or symbolic commands from the \textit{Task Planner} and maps them to actual geometric values and drives the behavior of the \textit{Motion Planner}. This module retrieves information regarding the work-space and robot from the \textit{Knowledge Base}. It also provides an acknowledgment to the \textit{Task Planner} upon the execution of a command by the robot. The \textit{Motion Planner} module plans the outcome of robot behaviors before their actual execution. 

\begin{figure}[]
\includegraphics[width=0.75\textwidth]{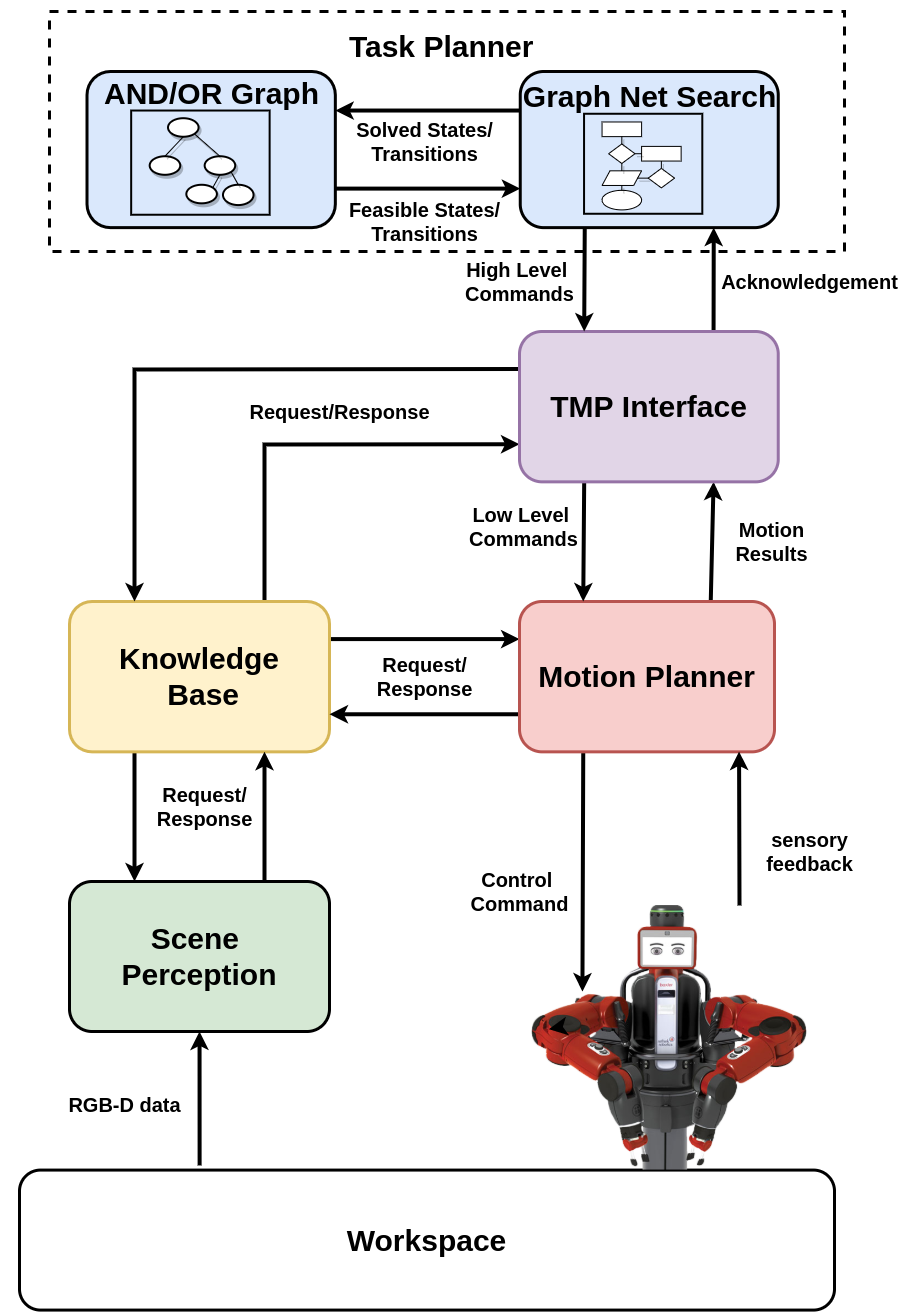}
\caption{System's architecture of \textsc{TMP-IDAN} framework.}
\label{fig:arch}
\end{figure}

The \textit{Task Planner} module embeds the augmented \textit{AND/OR Graph} and the AND/OR \textit{Graph Net Search}. Along with the AND/OR graph, the \textit{Task Planner} module is in charge of decision making and adaptation of the ongoing parallel (by parallel we mean different actions feasible from the current state) hyper-arcs. To do so, the \textit{Task Planner} provides a set of achieved transitions between the states to the \textit{Graph Net Search} module and receives the set of allowed cooperation states and transitions with the associated costs to follow. It then associates each state or state transition with an ordered set of actions in accordance with the work-space and robot configuration and finally incorporates the online simulation results to assign actions to the robot arms. Once an action is carried out, it receives an acknowledgment from the planning level and updates its internal structure. The \textit{Knowledge Base} stores all relevant information to make the cooperation progress. 

\begin{algorithm}[t]
\SetAlgoLined
\Input{Augmneted AND/OR Graph (AO), Task Planner (TP), TMP Interface (TMPI), Motion Planner (MP), Knowledge Base (KB), Scene Perception (SP)}
 \While{Target not retrieved}{
 AO:AddNewGraph()\;\label{AO}
 \SetKwBlock{blocka}{\textnormal{ TP: Request AO NextFeasibleStates()\label{TP}}}{}
\blocka{
 \eIf{Request = empty}{
   Go to Line~\ref{AO}\;}{
   continue\;
  }
  Find NextOptimalState()\;\label{NOP}
  \For{\textnormal{tasks, agents} in OptimalState}{
   SendToTMPI(tasks,agents)\;
   \For{\textnormal{task} in tasks}{
   \For{\textnormal{agent} in agents}{
   TMPI: RequestKB()\;
   KB: RequestSP()\;
   TMPI: RequestMP()\;
   }
   }
   }
   }
   \eIf{RequestMP()}{
   \SetKwBlock{blockb}{\textnormal{TP: FindOptimalMotionPlan()}}{}
\blockb{
TMPI: T $\leftarrow$ Optimal Motion Plan\;
\eIf{T executed}{Go to Line~\ref{TP}\;}{Go to Line~\ref{AO}\;\label{retry}}
   }}{Go to Line~\ref{NOP}\;}
   }
 \caption{\textsc{TMP-IDAN}}
 \label{algo}
\end{algorithm}
\subsection{Task Planning with AND/OR Graph Networks}
The AND/OR graph representation provides a framework for the planning and scheduling of task sequences~\cite{sanderson1988TAES}. 
The feasibility of the tasks is then checked using a suitable motion planner~\cite{karami2020ROMAN}. Moreover, an AND/OR graph inherently requires fewer nodes than the corresponding complete state transition graph, reducing the search complexity of the AND/OR space~\cite{sanderson1988TAES}. Yet, such a representation requires that the number of object re-arrangement to retrieve a target from clutter is known ahead of time. This representation thus seems incompatible as we do not know before-hand the number of objects to be re-arranged. To address this challenge, we introduce AND/OR graph networks as discussed in Section~\ref{sec:AOgraph} wherein an augmented AND/OR graph grows online until the target grasp is achieved. For the cluttered table-top scenario, the AND/OR graph network with two graphs is seen in Fig.~\ref{fig:combined} (right). Given the initial work-space configuration, a virtual node representing the same (\textsf{INIT\#0}) is added giving the augmented AND/OR graph $G_0^a$. The graph $G_0^a$ encodes the fact that if a feasible grasping trajectory exists then the \textsf{picked target} task is to be performed and otherwise an object is to be identified to be either removed  or pushed. Let us consider the case that a target grasping trajectory does not exit and that a \textsf{grasped closest object of target} followed by \textsf{object placed in storage} is achieved which is a failure node and thus exhausting $G_0^a$. This leads to a new graph $G_1^a$, which corresponds to a new augmented graph, with the same states and actions as $G_0^a$ but a different work-space configuration encoded via the virtual node \textsf{INIT\#1}. This process iteratively repeats itself until a graph $G_{n'}^a$ is solved which represents an AND/OR graph network $\Gamma$ of depth $n'$. The augmented AND/OR graphs $G^a_0,\ldots, G^a_{n'}$ are thus iteratively deepened to obtain an AND/OR graph network whose depth $n'$ is task depended and not known before-hand. We note here that in case a task in $G_i^a$ fails, for example due to motion, actuation or grasping errors, $G_{i+1}^a$ (with current work-space configuration) is grown and in this way our approach is robust to execution failures. 

Algorithm~\ref{algo} describes the overall planning procedure. It proceeds with the an AND/OR graph augmented with the initial work-space configuration (line~\ref{AO}) which is initiated by the AND/OR graph module via the call to \textit{AddNewGraph}. The task planner module then checks for feasible states (call to \textit{NextFeasibleStates}) in the augmented graph (line~\ref{TP}). From among the feasible states the optimal state is selected with the call to \textit{NextOptimalState}. Our cost function is a combination of distance of the object to the robot base, distance to the left, right robot gripper and the size of the object. The tasks and agents (left and right robot arm) of the feasible state are communicated to the TMP interface with the call to \textit{SendToTMPI}. Upon call to \textit{RequestKB} and \textit{RequestSP}, the geometric location of the objects, grippers are communicated to the TMP interface though the knowledge base and scene perception modules. A motion plan is sought for by the TMP interface with the \textit{RequestMP}. If the motion plan execution fails, for example a grasping failure, then a re-try is attempted (line~\ref{retry}). If a motion plan is not found then another object is selected for re-arrangement (line~\ref{NOP}); see Remark~\ref{remark1}. If motion plan is successful, a new graph is expanded and the process repeats until the target is retrieved. 
\vspace{-0.1cm}
\subsection{Probabilistic Completeness}
We now prove the probabilistic completeness of TMP-IDAN.
\begin{lemma}
For a predetermined depth limit $l$, TMP-IDAN is probabilistically complete.
\end{lemma}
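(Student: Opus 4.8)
The plan is to establish probabilistic completeness by connecting the iterative deepening of the AND/OR graph network to the probabilistic completeness of the underlying sampling-based motion planner. The key observation is that TMP-IDAN has two layers of completeness to reconcile: at the task level, the AND/OR graph network iteratively deepens (one graph per object re-arrangement), and by Proposition~\ref{prop} the network is solved at some depth $d$ when $G^a_d$ is solved; at the motion level, the feasibility of each grasp/placement is decided by a sampling-based motion planner, which is itself only probabilistically complete. So the overall claim must be that, as the number of motion-planning samples grows, the probability that TMP-IDAN returns a valid plan (within the depth budget $l$) tends to one.

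First I would fix the setting implied by the statement: we are given a predetermined depth limit $l$, and by the standing assumption stated just before Proposition~\ref{prop}, a feasible solution exists --- i.e. there is a finite sequence of at most $l$ object re-arrangements after which a collision-free target grasp trajectory exists in $C_{free}$. I would then invoke the standard probabilistic-completeness property of the chosen sampling-based motion planner: for any motion query that admits a solution in $C_{free}$, the probability that the planner fails to find one decays to zero as the number of samples $k \to \infty$. Formally, for a single feasible query there is a function with $\lim_{k \to \infty} \Pr[\textrm{failure after } k \textrm{ samples}] = 0$. This is the building block I would assume from the motion-planning literature (the same planner invoked via \textit{RequestMP} in Algorithm~\ref{algo}).

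The core of the argument is then a union-bound / finite-composition step. Along the solution branch guaranteed by our assumption, TMP-IDAN must succeed on a finite number of motion queries: at most $l$ re-arrangement grasps/placements plus the final target grasp, so at most $l+1$ feasible queries, each of which is solvable by hypothesis. Let $p_k$ denote an upper bound on the per-query failure probability after $k$ samples. Since each of the finitely many required queries is feasible, the probability that TMP-IDAN fails to realize the entire length-$(\le l)$ solution is bounded by $(l+1)\,p_k$ via the union bound over the queries on the solution branch, and this tends to $0$ as $k \to \infty$ because $l$ is fixed and $p_k \to 0$. Hence $\Pr[\textrm{TMP-IDAN succeeds within depth } l] \ge 1-(l+1)p_k \to 1$, which is exactly probabilistic completeness. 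I would also note the role of Remark~\ref{remark1} here: each motion-planner failure triggers exactly one object re-arrangement and grows the next graph $G^a_{i+1}$, so the depth index advances deterministically and the depth limit $l$ caps the number of queries, keeping the union bound finite.

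The main obstacle I anticipate is rigorously justifying that the finite branch of feasible queries is the right object to bound --- in particular, that confining attention to depth $\le l$ does not lose the solution. Two subtleties need care. First, the existence assumption guarantees \emph{some} feasible re-arrangement sequence, but the greedy cost-driven object selection in \textit{NextOptimalState} may choose a different object to re-arrange; I would argue that because any single re-arrangement that clears the path preserves feasibility of the remaining problem (and the target-grasp query itself remains feasible once the obstructing objects are removed), the completeness bound still holds provided $l$ is at least the number of genuinely obstructing objects --- this is where the phrase ``as the predetermined limit approaches infinity a solution is found'' from the discussion after Proposition~\ref{prop} must be made precise. Second, I must ensure independence or at least a valid union bound across the sequential queries, since later queries are conditioned on earlier successes; the union bound sidesteps independence by bounding the probability of the failure event as a sum, which is valid regardless of dependence. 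Handling the greedy-selection subtlety cleanly is the part I would spend the most effort on; the sampling-to-zero argument itself is routine once the finite query count is pinned down.
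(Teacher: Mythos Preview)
Your proposal is sound but takes a genuinely different route from the paper. You fix the depth limit $l$, assume a feasible re-arrangement sequence of length at most $l$ exists, and then let the number of motion-planning samples $k\to\infty$, bounding the overall failure probability by a union bound over the at most $l+1$ feasible motion queries along the solution branch. The paper's own proof is much more informal: it notes that RRT is probabilistically complete, observes that each augmented graph $G^a_i$ is guaranteed to terminate (at a success or failure node), and then argues that since a solution is assumed to exist, the only remaining obstruction is that the chosen $l$ is too shallow --- hence as $l\to\infty$ the probability of solving $\Gamma$ tends to one. In other words, the paper varies $l$ rather than the per-query sample budget, and never writes down a quantitative bound.

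What each approach buys: your union-bound argument is sharper and actually yields a rate ($1-(l+1)p_k$), and it matches the lemma's phrasing ``for a predetermined depth limit $l$'' more literally. The paper's $l\to\infty$ argument is looser but sidesteps your added hypothesis that the solution has depth at most $l$; it only needs \emph{some} finite-depth solution to exist. You correctly flag the greedy object-selection issue (\textit{NextOptimalState} need not follow the assumed feasible branch) as the delicate point --- the paper's proof does not address this either, so your identification of it as the main obstacle is apt rather than a defect of your plan.
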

\begin{proof}
For motion planning, we use RRT motion planner~\cite{kuffner2000ICRA} which is probabilistically complete~\cite{karaman2011IJRR}. Thus, given sufficient time, the probability of finding a plan, if one exists, approaches one. For the AND/OR graph network $\Gamma = (\mathcal{G},T)$ based task planner, by design each graph $G^a_i$ terminates either at a success node or a failure node and hence each $G^a_i$ complete. Using a predetermined depth limit $l$, $\Gamma$ is expanded at most till depth $l$. Thus if $G^a_j$, $j\leq l$ is solved then $\Gamma$ is solved. Else $\Gamma$ is terminated at $G^a_l$ and a non-existence of solution is reported. In reality, a solution does not exist at all or the chosen $l$ is shallow. However, the first scenario (non-existence of a solution) is quashed since we are interested in TMP scenarios with solutions (consequence of our assumption). Thus, without any loss of generality, it can be argued that as $l$ approaches infinity the probability of $\Gamma$ being solved asymptotically approaches one. \end{proof}

\begin{figure}[t]
    \centering
    \includegraphics[width = 0.99\textwidth]{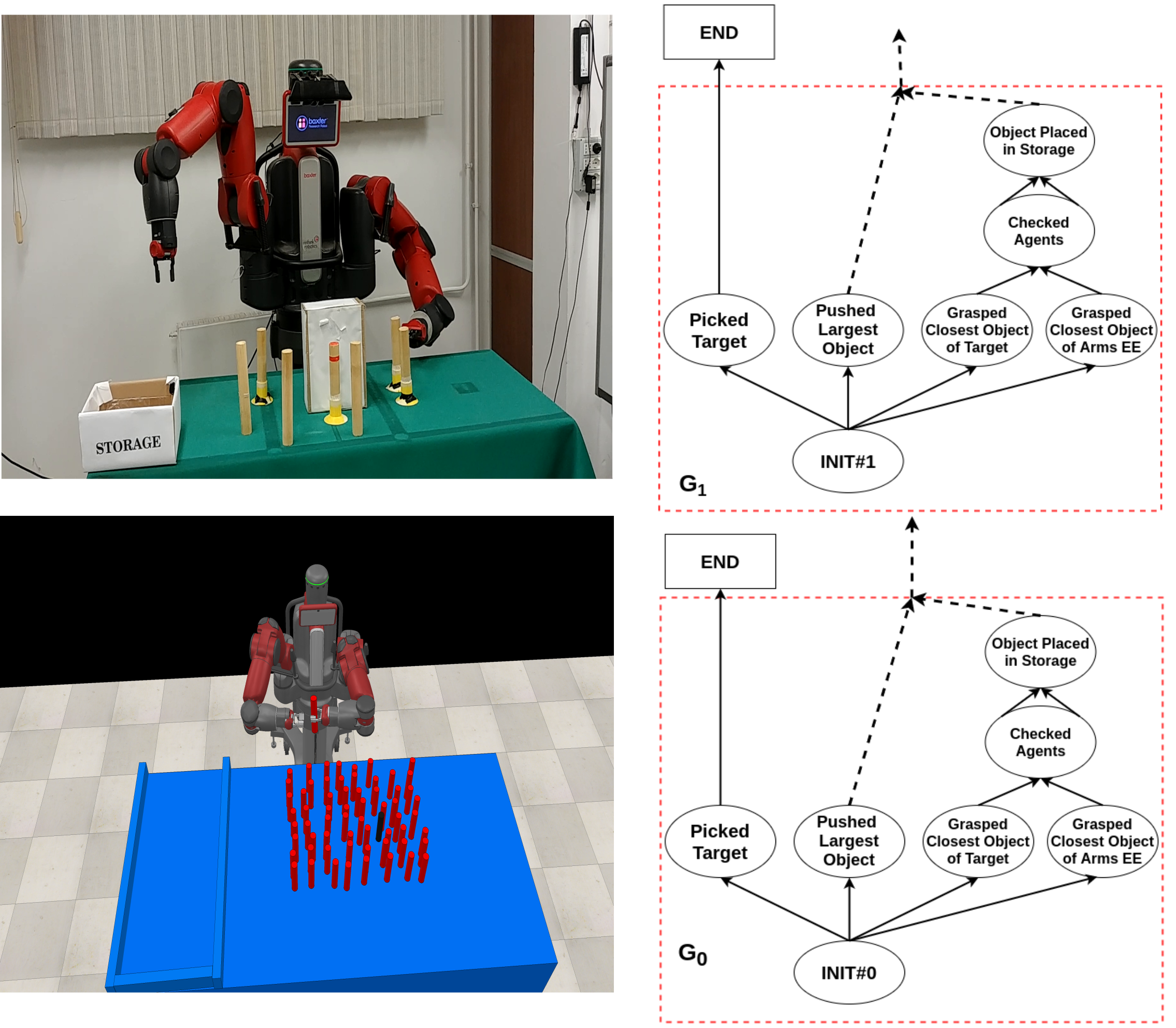}
    \vspace{-0.1cm}
    \caption{(top-left) \textsc{TMP-IDAN} in real world (bottom-left) and in simulation. (right) AND/OR graph net for the clutter scenario.}
    \label{fig:combined}
    \end{figure}

\section{Experimental Results}
\label{sec:results}




\begin{figure*}[ht]
\centering
 \subfloat[]{\includegraphics[width=0.34\textwidth]{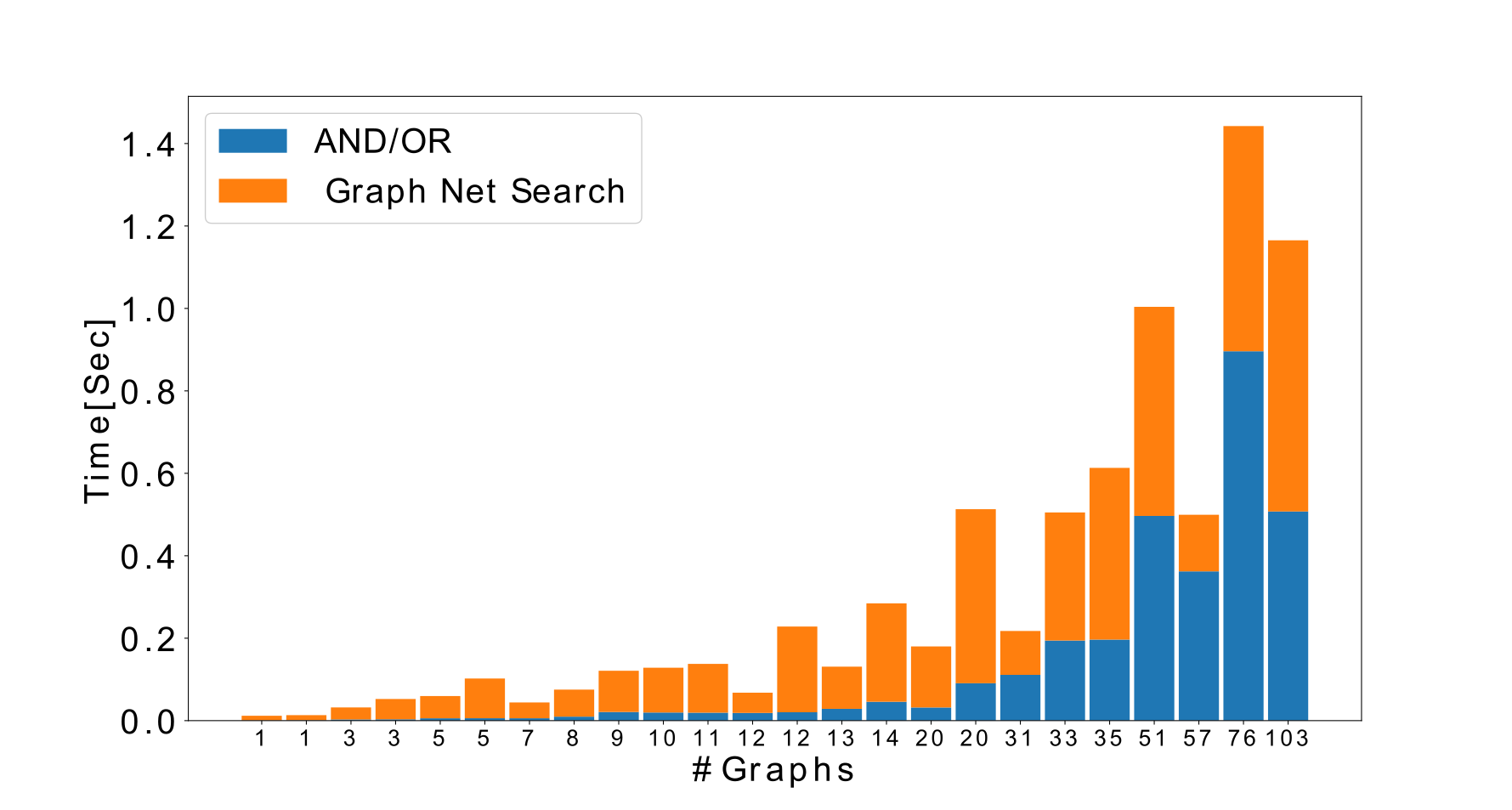}\label{fig:compa}}
  \subfloat[]{\includegraphics[width=0.34\textwidth]{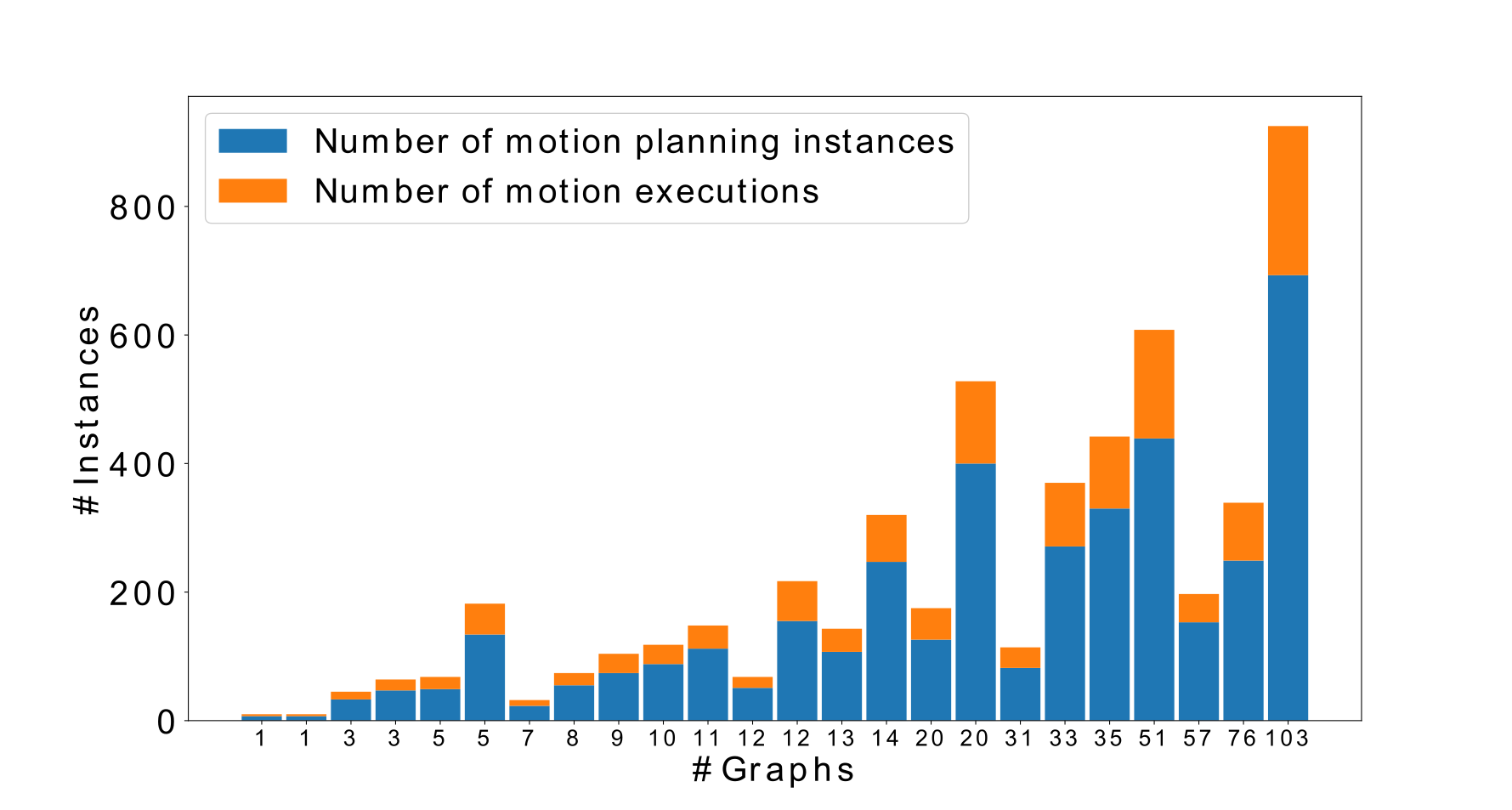}\label{fig:compb}}
  \subfloat[]{\includegraphics[width=0.34\textwidth]{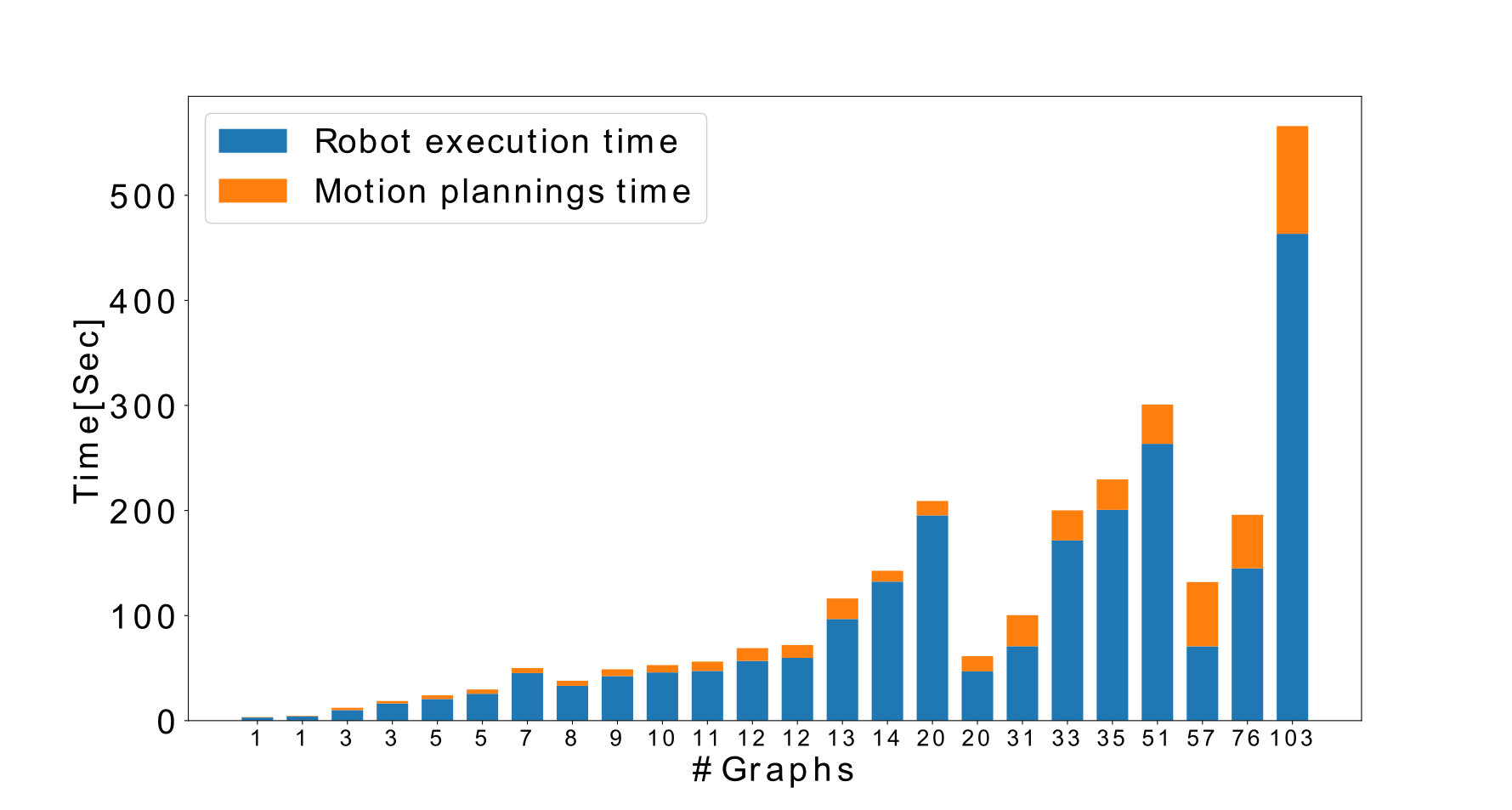}\label{fig:compc}}
  \vspace{-0.1cm}
\caption{Various histograms with increasing network depth.}
\label{fig:comp}
\end{figure*}
The experiments are performed on Rethink Robotics dual-arm Baxter robot which is equipped with standard grippers and an RGB-D camera mounted on its head and pointing downward which is used to acquire images for object detection. The benchmarks were conducted on a workstation equipped with an Intel(R) core i7-8700@3.2 GHz $\times$ 12 CPU's and 16 GB of RAM. The architecture is developed using C++ and Python under ROS Kinetic. In order to validate the effectiveness of \textsc{TMP-IDAN}, we consider a cluttered table-top scenario wherein a target object is to be retrieved from among clutter (see Fig.~\ref{fig:combined}(top-left and bottom-left)). The experiment scenario includes two physical agents, that is, the right and left arms of the Baxter and a table-top with cylinders and cuboids. Removed objects are placed in a storage close to the right arm and in case the object is picked by the left arm, it is handed to right arm and finally placed in the storage. This sequence goes on until the target grasp is feasible. A video demonstrating the results can be found at \url{https://youtu.be/QV5Y1iyicQo}.
\subsection{Description of the experiments}
To verify the adaptability of \textsc{TMP-IDAN} in real-world scenarios and it's robustness with respect to grasping failures, we first employ \textsc{TMP-IDAN} in real world with Baxter robot. Subsequently we perform twenty four different experiments using the state-of-the-art robotics simulator CoppeliaSim~\cite{coppeliaSim} to corroborate that our approach is highly fast and has a linear computational time profile with respect to number of expanded graphs. Moreover, simulation environment enables us to induce scenarios where number of objects is not known in advance and also vary the target object location randomly. In simulation, we start with four objects in the scene and increase the complexity by adding up to 64 objects. In all the runs, the position of target object is chosen randomly. Fig.~\ref{fig:combined} (right) shows the AND/OR graph network employed. For the initial graph $G_0$ there exists several parallel hyper-arcs of which picking the target object is given the minimum cost. If the target is not reachable by any of the agents, then closest object to target is set to be removed and once a feasible hyper-arc is found, then agents move forward in the graph with the set of assigned actions. 
\begin{table}[h]
\centering
\scalebox{0.8}{
\begin{tabular}{|l|c|c|} 
\hline
Module              &Average (s)  & Std. dev(s) \\ 
\hline
AND/OR graph & 0.0031          & 0.0028 \\ 
Graph network        & 0.0104          & 0.0048\\
Motion planning attempts      & 9.3980        & 5.6640\\
Motion executions      & 3.3540       & 1.9320\\
Motion planning time               & 0.8010      & 0.2170\\
Motion execution time   &4.7490    &2.0240\\
\hline
\end{tabular}}
\caption{Computation times for different modules of \textsc{TMP-IDAN}.}
\label{tab:compana}
\end{table}
\vspace{-0.6cm}
\begin{table}[h]
\centering
\scalebox{0.75}{
\begin{tabular}{|c|c|c|c|c|c|} 
\hline
Objects & d & TP (s) & MP (s) & MP attempts & Objects re-arranged\\
\hline
4 & 1.67 & 0.018 &1.044 & 15.66 & 1.33\\
\hline 
8 & 7.33 & 0.068 & 4.560 & 50.66 &4\\
\hline
15 & 14.33 & 0.170 & 20.893 & 177 & 7.5\\
\hline
20 & 57 & 0.422 & 61.168 & 400 & 18\\
\hline 
30 & 19.66 & 0.188 & 28.591 & 159.66 & 9\\
\hline
42 & 72 & 0.462 & 44.148 & 384.5 & 18.5\\
\hline
49 & 26.5 & 0.342 & 24.265 & 201 & 10\\
\hline
64& 76 & 0.657 & 102.575 & 693&29\\
\hline
\end{tabular}}
\caption{$d$- average network depth, TP- average total task planning time, MP- average total motion planning time, average motion planning attempts and the average number of objects to be re-arranged as the degree of clutter is increased.}
\label{tab:table1}
\end{table}

\subsection{Validation}
Table~\ref{tab:compana} shows the average planning and execution time for each module discussed in Section~\ref{sec:approach}. We increase the number of objects on the table and for each object number, we perform the experiment 3 times by randomly sampling the target location. Table~\ref{tab:table1} reports the average network depth $d$, average total task planning time, average total motion planning time, average number of motion planning attempts and the average number of objects to be re-arranged. Task planning times with increasing network depth corroborates our discussion on time complexity (see Section~\ref{sec:AOgraph}) and is almost linear with respect to $d$. For motion planning, we use Moveit package \cite{sucan2013moveit} as RRT planner within OMPL~\cite{sucan2012RAM} and we place a time bound of one second. However, motion planning failures due to actuation errors, grasping failures or occlusion lead to re-plan, explaining the large number of motion planning attempts. 

In addition to the computational complexity analysis in Table~\ref{tab:table1}, Fig.~\ref{fig:comp} shows different histogram plots with increasing network depth $d$. Fig.~\ref{fig:comp}(a) shows the total task planning time with $d$. As seen above, the planning times are almost linear with increasing $d$. However, slight deviations are readily observed. For example, the time for $d=76$ is greater than the time for $d=103$. This is due to the fact that in many cases, due to motion planning failure a new graph is expanded before reaching the terminal node. Thus for $d=76$, more number of nodes are traversed when compared to $d=103$ and rightly justifies the histogram. Fig.~\ref{fig:comp}(b) reports the number of motion planning attempts and the total executions with increasing $d$. Note that this depends on the degree of clutter and thereby the number of object re-arrangements required. This is more clearly observed in the last two columns of Table~\ref{tab:table1} where increase in motion planning attempts can be seen with increased object re-arrangements. Finally, in Fig.~\ref{fig:comp}(c), total execution times with $d$ can be seen.

\section{Conclusion}
\label{sec:conclusion}
We present a novel approach \textsc{TMP-IDAN}, a framework for task-motion planning that provides a compact abstraction for task-level representation using AND/OR graph networks. This reduces the overall time and space complexity of task planning. The network representation handles challenge of unknown object re-arrangements and allows online expansion of graphs till the target is retrieved. We use off-the-shelf motion planner and therefore our approach can readily be used with any motion planner. We validate \textsc{TMP-IDAN} on a physical Baxter robot and in simulation and it seen that our approach works with different degrees of clutter. Task-planning time scales linearly with respect to increasing network depth and this enables fast online computation in heavily cluttered non-trivial scenarios. Immediate future work includes reducing the depth of the graph by developing an efficient heuristic for object selection for re-arrangement and extension to human-robot and robot-robot collaboration. 

\bibliographystyle{plain}
\bibliography{References}


\end{document}